\relax
\documentclass[letterpaper]{article} 
\usepackage{aaai21}  
\usepackage{times}  
\usepackage{helvet} 
\usepackage{courier}  
\usepackage[hyphens]{url}  
\usepackage{graphicx} 
\urlstyle{rm} 
\usepackage{natbib}  
\usepackage{caption} 
\frenchspacing  
\setlength{\pdfpagewidth}{8.5in}  
\setlength{\pdfpageheight}{11in}  
\pdfinfo{
/Title (AAAI Press Formatting Instructions for Authors Using LaTeX -- A Guide)
/Author (AAAI Press Staff, Pater Patel Schneider, Sunil Issar, J. Scott Penberthy, George Ferguson, Hans Guesgen, Francisco Cruz, Marc Pujol-Gonzalez)
/TemplateVersion (2021.2)
} 

\setcounter{secnumdepth}{0} 

%



\usepackage{amsfonts}
\usepackage{amsmath}
\usepackage{amsthm}
\usepackage{amssymb}
\usepackage{mathtools}
\usepackage{appendix}

\newtheorem{theorem}{Theorem}

\newtheorem{lemma}{Lemma}

\newcommand{\norm}[1]{\left\lVert#1\right\rVert}
\newcommand{\safeSet}{\mathcal{S}}
\newcommand{\decisionSet}{\mathcal{X}}
\newcommand{\actionSet}{A}
\newcommand{\reals}{\mathbb{R}}

\title{Optimistic and Adaptive Lagrangian Hedging}
\author{
    Ryan D'Orazio,\textsuperscript{\rm 1}\\
    Ruitong Huang \textsuperscript{\rm 2}\\
}
\affiliations {
    \textsuperscript{\rm 1} Universit\'e de Montr\'eal and MILA \thanks{This work is partially done when Ryan D'Orazio was working at Borealis AI.}\\
    \textsuperscript{\rm 2} Borealis AI \\
    ryan.dorazio@mila.quebec, 
    rtonghuang@gmail.com
}
\begin{document}

\maketitle

\begin{abstract}
In online learning an algorithm plays against an 
environment with losses possibly picked by
an adversary at each round.
The generality of this framework includes 
problems that are not adversarial, for example
offline optimization, or saddle point problems (i.e. min max optimization).
However, online algorithms are typically not designed 
to leverage additional structure present in non-adversarial problems.
Recently, slight modifications to well-known online 
algorithms such as optimism and adaptive step sizes
have been used
in several domains to accelerate online learning -- 
recovering optimal rates in offline smooth optimization,
and accelerating convergence to saddle points or social
welfare in smooth games.
In this work we introduce optimism and adaptive stepsizes
to Lagrangian hedging, a class of online algorithms
that includes regret-matching, and hedge (i.e. multiplicative weights). 
Our results include: a general general regret bound; a path length regret bound for a fixed
smooth loss, applicable to an optimistic variant of regret-matching and regret-matching+; optimistic regret bounds for $\Phi$ regret,
a framework that includes external, internal, and swap regret;
and optimistic bounds for a family of algorithms that includes
regret-matching+ as a special case.

\end{abstract}

\section{Introduction}

Online optimization is a general framework applicable to various problems such as offline optimization, 
and finding equilibria in games. 
Typical algorithms only use first-order information~(i.e. a subgradient or gradient),
such as online mirror descent (MD)~\citep{nemirovsky1983problem, warmuth1997continuous, beck2003mirror} which generalizes projected gradient descent~(see for example ~\cite{orabona2019modern}), 
and follow the regularized leader (FTRL)~\citep{shalev2006online,abernethy2009competing, nesterov2009primal}.\footnote{See \citeauthor{orabona2019modern} for an excellent historical overview of MD and FTRL.}

In general, online learning is adversarial, losses
may change almost arbitrarily from one time step to the
next. However, most problems of interest including
offline optimization, and saddle point optimization
can be ``predictable." That is, the sequence of losses 
induced by running an online algorithm in these settings
has specific structure and can be predictable under the right
conditions, like smoothness (i.e. Lipschitz continuous gradient).
When losses are predictable a powerful framework is 
optimistic online learning~\citep{Rakhlin13a,Rakhlin13b,chiang2012online}.
Where algorithms are modified to incorporate a \emph{guess} of the next loss, $m_t$, into their update.

Combining optimism with MD and FTRL yields their
optimistic counterparts, optimistic mirror descent (OMD),
and optimistic follow the regularized leader (OFTRL), respectively.
OMD and OFTRL both provide tangible benefits when problems are not quite adversarial. For example,
faster convergence to a saddle point on the average~\cite{Rakhlin13b,syrgkanis2015fast,farina2019stable,farina2019optimistic}; faster convergence to optimal social wellfare in n-player games~\cite{syrgkanis2015fast}; last iterate convergence
in games~\citep{daskalakis2018last};
acceleration in offline or online optimization~\citep{cutkosky2019anytime,mohri2016accelerating,joulanisimpler, joulani2017modular}.

Interestingly, much of the analysis of optimistic algorithms is black-box. For example, most of the
results rely on regret bounds being of a particular form,
which is satisfied by both OMD and OFTRL. Naturally,
one may ask what other classes of algorithms can be 
combined with optimism to achieve faster rates in 
predictable problems?

In this paper we extend the idea of optimism to
the class of algorithms known as Lagrangian hedging~\cite{gordon2007no}.
Unfortunately, the regret bounds attained are not
consistent with OMD and OFTRL, therefore, immediate
theoretical acceleration via the previously mentioned works is not attained. 
However, our analysis provides interesting regret
bounds that should be small given a ``good" guess.
And in the case for a smooth fixed loss we show a
path length bound for the regret. 
This result, for example, is applicable to an optimistic varaint of the 
well-known regret-matching algorithm
when used to train a linear regressor with $L_1$ regularization and the least-squares loss~\cite{schuurmans2016deep}.

Additionally, our analysis extends beyond the typical
regret objectives of MD and FTRL, and includes regret
bounds for internal and swap regret~\cite{cesa2006prediction}.
To the best of our knowledge, our results provide
the first optimistic and adaptive algorithms for 
minimizing internal regret, with
possible applications including
finding correlated equilibria in $n$-player general sum games~\cite{cesa2006prediction}.

\section{Background}

\subsection{Online Linear Optimization}

In online convex optimization an algorithm
$\mathcal{A}$ interacts with an environment 
for $T$ rounds~\cite{GIGA}. 
In each round $t$, $\mathcal{A}$ selects an
iterate $x_t$ within some convex compact 
set $\decisionSet$, afterwhich a convex loss function
$\ell_t : {\decisionSet} \to \reals$ chosen by the environment is revealed. 
Furthermore, $\mathcal{A}$ is only allowed to 
use information from previous rounds.
The performance of $\mathcal{A}$ after $T$ rounds
is measured by its regret
\begin{align}
    R^T_\decisionSet = \sum_{t=1}^T \ell_t(x_t) -\underset{x \in \decisionSet}{\min} \sum_{t=1}^T \ell_t (x).
\end{align}
The objective is to ensure sublinear regret,
$R^T_\decisionSet \in o(T)$, e.g $R^T_\decisionSet \in
O(\sqrt{T})$.
In the most general of settings, no assumptions
are made on the sequence of losses $\{\ell_t\}_{t\leq T}$, they may 
be chosen by an adversary with knowledge of $\mathcal{A}$.

If each loss $\ell_t$ is subdifferentiable at 
$x_t$, 
then there exists
a vector $\partial \ell_t(x_t)$ (a subgradient) such that
\[
    \ell_t(x) \geq \ell_t(x_t) + \langle \partial \ell_t(x_t), x-x_t \rangle \quad \forall x \in \decisionSet.
\]
Provided $\mathcal{A}$ has access to a subgradient,
it is enough to design algorithms for linear losses.
The original regret $R^T_\decisionSet$ is upper bounded 
by the regret with respect to the linear losses $\{\Tilde{\ell}_t\}_{t\leq T}$, where
$\Tilde{\ell}_t(x) = \langle \partial \ell_t(x_t), x \rangle$.
For the remainder of the paper we assume linear losses
unless specified otherwise.

\subsection{Lagrangian Hedging}

Lagrangian hedging defines a class of algorithms
for online linear optimization~\cite{gordon2007no}.
The class generalizes potential based methods
introduced by \citeauthor{cesa2003potential}
for learning with expert advice~\cite{cesa2003potential},\footnote{Learning
with expert advice resembles online linear optimization
where the decision set $\decisionSet$ is an $n$-dimensional simplex $\Delta^n$, which is interpreted
as the set of distributions over $n$ experts.}
and includes the well-known Hedge aglorithm (also known as multiplicative weights)~\cite{freund1997decision},
and regret-matching~\cite{Hart00Rm} .

At each round $t$ a Lagrangian hedging algorithm 
maintains a regret vector 
\[
s_{1:t-1} = s_{1:t-2} + \langle \ell_{t-1}, x_{t-1}\rangle u - \ell_{t-1},
\]
with the initial vector initialized as $ s_{1:0} = s_0 = 0$.
The change in the regret
vector is denoted as $s_t = \langle \ell_t, x_t \rangle u - \ell_t$, $s_{1:t} = \sum_{k=1}^t s_k$.
$u$ is a vector such that $\langle u, x\rangle = 1$ for any
$x \in \decisionSet$.
As mentioned by \citeauthor{gordon2007no}~\cite{gordon2007no},
if no such $u$ can be found then we may append an extra
$1$ for each $x \in \decisionSet$. 
Then we can take $u$ to be
the vector of zeros except for a $1$ coinciding with the new dimension added to $x$, as well as append a $0$ to each loss. $s_{1:t}$ is referred to
as the regret vector because it tracks how well
an algorithm has done so far,
\[
    \sum_{t=1}^T \langle \ell_t, x_t\rangle
    - \sum_{t=1}^T \langle \ell_t, x \rangle
    = \langle s_{1:T} , x \rangle \quad \forall x \in  \decisionSet.
\]
The regret is then simply $R^T_\decisionSet = \max_{x \in \decisionSet} \langle s_{1:T},x \rangle$.

Instead of explicitly ensuring the regret to be small, 
Lagrangian hedging ensures $s_{1:T}$ is not too far from 
a safe set $\safeSet$. The safe set is defined
to be the polar cone to $\decisionSet$,
\[
    \safeSet = \{ s :\forall x \in\decisionSet \, \langle s, x \rangle \leq 0
    \}.
\]
Forcing $s_{1:T}$ to be in $\safeSet$ may not be possible,
as it would guarantee $R^T_\decisionSet \leq 0$ when
it is possible to encounter an adversary that guarantees
$\Omega(\sqrt{T})$ regret~\citep{orabona2019modern, hazan2016introduction}.
However, 
\begin{align}
    R^T_\decisionSet &= \max_{x \in \decisionSet} \langle s_{1:T},x \rangle  \leq \max_{x \in \decisionSet} \langle s_{1:T} - s,x \rangle \quad \forall s \in \safeSet \nonumber\\
    & \leq \inf_{s\in \safeSet} \norm{s_{1:T} - s}\max_{x \in \decisionSet}\norm{x}_\ast. \label{reg-dist}
\end{align}
Therefore, if the distance of $s_T$ to the set 
$\safeSet$ grows at a sublinear rate then
the regret will be sublinear, since by assumption
the set $\decisionSet$ is bounded, $\norm{x}_\ast \leq D$. The norm $\norm{\cdot}_\ast$ is the dual norm of $\norm{\cdot}$,
defined as $\norm{x}_\ast = \sup \{\langle x, y\rangle | \norm{y} \leq 1 \}$.

Additionally, we assume the change in the regret
vector is bounded in norm,
$\norm{s_t}^2 \leq C$. 
This assumption is similar to assuming 
bounded linear functions $\norm{\ell_t} \leq C$, or in the convex (possibly non-linear) case $\norm{\partial \ell_t } \leq C$ (i.e convex Lipschitz continuous functions).

The distance of $s_{1:t}$ to $\safeSet$ is then controlled
via a smooth potential function $F$, with the following conditions:

\begin{align}
    &F(s) \leq 0 \quad \forall s \in \safeSet \label{safe}\\
    &F(x+ y) \leq F(x) + \langle \partial F(x), y \rangle + \frac{L}{2}\norm{y}^2  \label{smooth}\\
    &(F(s) + A)^+ \geq \inf_{s' \in S}B \norm{s-s'}^p,   \label{dist-bound}
\end{align}
for constants $L,B > 0$, $A \geq 0$, and $1\leq p \leq 2$.
$\partial F(x)$ is a subgradient of $F$ at $x$.
$(x)^+$ refers to the Relu operation which sets
all negative values in the vector to $0$.
In addition to the above conditions we will also assume 
that $F$ is convex, and therefore differentiable
with $\partial F(x) = \nabla F(x)$, the gradient of $F$ at $x$.
When $F$ is differentiable condition (\ref{smooth})
is equivalent to Lipschitz continuity of the gradient,
$\norm{\nabla F(x) -\nabla F(y)}_\ast \leq L \norm{x-y}$~\cite{nesterov2018lectures}[Theorem 2.1.5].

Once an appropriate potential function is chosen,
a Lagrangian hedging algorithm ensures $F(s_{1:T}) \in O(T)$
by picking an iterate at each round $t$ such that
\begin{align}
    \langle \nabla F(s_{1:t-1}), s_t \rangle \leq 0,\label{blackwell}
\end{align}
for any possible $s_t$ ($s_t$ can change depending on the 
loss $\ell_t$ picked by the environment).
The above inequality is also known as the Blackwell
condition, often used in potential based expert algorithms~\cite{cesa2006prediction}, 
and, as shown by \citeauthor{gordon2007no},
is guaranteed if the iterate at time $t$ is chosen by the following rule
\begin{align} \label{LH}
   x_t =  \begin{cases} 
      \frac{\nabla F(s_{1:t-1})}{\langle \nabla F(s_{1:t-1}), u \rangle } & \mbox{if } \langle \nabla F(s_{1:t-1}), u \rangle > 0 \\
      \mbox{arbitrary } x  \in \decisionSet & o.w.
   \end{cases}
\end{align}
\citeauthor{gordon2007no} also showed that procedure (\ref{LH})
always yields a feasible iterate $x_t \in \decisionSet$.

Equipped with the Blackwell condition
and the smoothness of $F$ (condition \ref{smooth}),
the growth of $F(s_{1:t})$ is easily bounded by
\begin{align*}
    F(s_{1:t}) = F(s_{1:t-1}+s_t)& \leq F(s_{1:t-1}) + \frac{L}{2}\norm{s_t}^2  \\
    &\leq F(s_{1:t-1}) + \frac{LC}{2}.
\end{align*}
Summing across time and with $s_{1:0} = s_0 = 0$,
\[
    F(s_{1:t}) \leq F(0) +\frac{LCt}{2} \leq \frac{LCt}{2},
\]
since $0 \in \safeSet$. With a linear bound on 
$F(s_{1:t})$ a regret bound follows immediately
by condition (\ref{dist-bound}) and (\ref{reg-dist}),

\begin{align}
        R^T_{\decisionSet} \leq D\left(\frac{LCT+2A}{2B}\right)^{1/p}. \label{lh-regret}
\end{align}

If $p=1$ then the regret is linear, however,
as mentioned by \citeauthor{gordon2007no},
a stepsize can be used to achieve sublinear regret.
We can define a new potential function with stepsize $\eta$,
$F_\eta(s) =F(\eta s)$.
The smoothness condition becomes
\begin{align*}
        F_\eta(x+y) &= F(\eta (x+y)) \leq \\
        &F(\eta x) + \eta \langle \nabla F(\eta x), y \rangle  + \frac{\eta^2L}{2}\norm{y}^2, \\
        & F_\eta( x) + \langle \nabla F_\eta(x), y \rangle  + \frac{\eta^2L}{2}\norm{y}^2.
\end{align*}

$F_\eta$ is therefore a valid potential function,
with condition (4) now being
\[
    (F_\eta(s) + A)^+ \geq \inf_{s' \in S}\eta^p B \norm{s-s'}^p.\footnote{See \citeauthor{gordon2007no} for details.}
\]
Following the same arguments
as \citeauthor{gordon2007no}~(\cite{gordon2007no}[Theorem 3]), the
regret becomes
\begin{align}
     R^T_{\decisionSet} \leq D\left(\frac{\eta^{2} LCT+2A}{2B\eta^p}\right)^{1/p}. \label{lh-step-regret}
\end{align}
When $p=1$ a stepsize $\eta \in O(\frac{1}{\sqrt{T}})$ achieves a regret 
bound of $O(\sqrt{T})$, similar to the standard results in  
MD and FTRL analysis (see \cite{orabona2019modern} for
example). 
Despite this guarantee on regret, this bound does not hold uniformly 
over time, one must have knowledge of the horizon $T$ to select 
a stepsize. 
However, the standard doubling trick can be applied to achieve 
a regret guarantee for all time steps, requiring
algorithm resets after exponentially growing time
intervals~\cite{cesa2006prediction}.

In MD, FTRL, and the potential based
approaches from expert problems, however, a stepsize schedule of
$\eta_t \in O(\frac{1}{\sqrt{t}}))$ is enough to achieve a $O(\sqrt{T})$ regret bound
that holds uniformly over time (applies to any time horizon). 
Given that Lagrangian hedging generalizes potential based methods,
a similar result likely should hold. 
Indeed we show with the help of the following simple yet important  lemma,
that the same learning rate schedule would suffice for
Lagrangian Hedging algorithms with potential functions that need a learning rate (i.e $p=1$). 
This result is interesting as it makes no
additional assumptions on the potential function;
whereas, for example when viewing multiplicative
weights as a potential based method and a 
specific instance of Lagrangian hedging,
inequalities 
particular to the algorithm (a specific potential function) are used to derive the regret bounds that hold uniformly over time~\cite{cesa2006prediction}.

First we extend the Lagrangian hedging framework with
an arbitrary sequence of stepsizes $\{ \eta_t\}_{t\leq T}$, where the potential function
$F(\eta_t s)$ is used at round $t$ to construct the
iterate $x_t$,
\begin{align}
   x_t =  \begin{cases} 
      \frac{\nabla F(\eta_t s_{1:t-1})}{\langle \nabla F(\eta_t s_{1:t-1}), u \rangle } & \mbox{if } \langle \nabla F(\eta_t s_{1:t-1}), u \rangle > 0 \\
      \mbox{arbitrary } x  \in \decisionSet & o.w.
   \end{cases} \label{lh-step}
\end{align}

\begin{lemma}
Assume $F$ is a convex function 
satisfying condition (\ref{safe}), 
consider step sizes $0 < \eta_t \leq \eta_{t-1}$,
then
\[
    F(\eta_t s) \leq \frac{\eta_t}{\eta_{t-1}}F(\eta_{t-1} s).
\]
\end{lemma}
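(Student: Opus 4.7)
The plan is to reduce the inequality to a single application of convexity, using the fact that $0$ lies in the safe set so condition~(\ref{safe}) gives $F(0) \le 0$.

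First I would set $\alpha = \eta_t/\eta_{t-1}$, which is a scalar in $(0,1]$ by the assumption $0 < \eta_t \le \eta_{t-1}$. Writing $\eta_t s = \alpha \cdot (\eta_{t-1} s) + (1-\alpha) \cdot 0$ expresses $\eta_t s$ as a convex combination of $\eta_{t-1} s$ and the origin, so convexity of $F$ immediately yields
\[
F(\eta_t s) \le \alpha F(\eta_{t-1} s) + (1-\alpha) F(0).
\]

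Next I would observe that $0 \in \safeSet$: trivially $\langle 0, x\rangle = 0 \le 0$ for every $x \in \decisionSet$, so $0$ belongs to the polar cone. Hence condition~(\ref{safe}) gives $F(0) \le 0$, and since $1-\alpha \ge 0$ the second term in the bound above is non-positive. Dropping it and substituting $\alpha = \eta_t/\eta_{t-1}$ produces the desired inequality. There is no real obstacle here; the only thing one must be careful about is signs — namely that $F(\eta_{t-1} s)$ need not be non-negative, so the bound uses the non-negativity of the coefficient $1-\alpha$ together with $F(0)\le 0$ rather than any sign assumption on $F(\eta_{t-1}s)$ itself.
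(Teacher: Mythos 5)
Your proof is correct and matches the paper's argument exactly: both write $\eta_t s$ as the convex combination $\frac{\eta_t}{\eta_{t-1}}(\eta_{t-1}s) + \bigl(1-\frac{\eta_t}{\eta_{t-1}}\bigr)\cdot 0$, apply convexity, and discard the $F(0)$ term using $0\in\safeSet$ and condition~(\ref{safe}). Your remark about not needing any sign assumption on $F(\eta_{t-1}s)$ is a nice clarification, but the substance is identical.
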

\begin{proof}
\begin{align*}
    F(\eta_t s) &= F\left(\frac{\eta_t}{\eta_{t-1}} \eta_{t-1}s + 0\right) = F\left(\frac{\eta_t}{\eta_{t-1}} \eta_{t-1}s + \left(1 - \frac{\eta_t}{\eta_{t-1}}\right)0\right) \\
    &\leq \frac{\eta_t}{\eta_{t-1}} F(\eta_{t-1}s) + (1 - \frac{\eta_t}{\eta_{t-1}})F(0) \\
    &\leq \frac{\eta_t}{\eta_{t-1}} F(\eta_{t-1}s) \mbox{, \quad since $0 \in \safeSet$}.
\end{align*}

\end{proof}

Coupling the above lemma with the  algorithm
(\ref{lh-step}) and the Blackwell condition 
yields a bound on the growth of $F(\eta_t s_t)$ 
and therefore a regret bound. 
Such a bound is a special case of optimistic
Lagrangian hedging when the prediction is $0$, and so we defer the presentation to the next section.



\section{Adaptivity and Optimism in Lagrangian
Hedging}

In this section we present the optimistic 
Lagrangian hedging algorithm along with
adaptive stepsizes and the regret guarantees.
Optimistic Lagangian hedging leverages a prediction $m_t$ at round $t$ to construct the
iterate $x_t$. In the optimistic and adaptive variants of
MD and FTRL, one hopes to have 
$m_t \approx \ell_t$ since the regret bounds
attained are usually of the form
$O\left(\sqrt{\sum_{t=1}^T \norm{m_t -\ell_t}^2}\right)$, with adaptive stepsizes (in the case of MD)
similar to 
\begin{align}
      \eta_t = \frac{1}{\sqrt{\sum_{s=1}^{t-1}\norm{m_s - \ell_s}^2}}.\label{ada-MD}
\end{align}

In optimistic Lagrangian hedging
we hope the prediction $m_t$ to be a good predictor of the change in the regret vector
$m_t \approx s_t$, 
with the provable regret bound of $O\left(\sqrt{\sum_{t=1}^T\norm{m_t -s_t}^2} \right)$.
Interestingly for the case of $p=2$ no adaptive
step size is needed!

\subsection{General Optimistic Bound}

Given a prediction $m_t$ we define
optimistic Lagrangian hedging with
stepsizes $\eta_t$ as the following rule

\begin{align}
\label{lh-opt-step}
   x_t =  \begin{cases} 
      \frac{\nabla F(\eta_t (s_{1:t-1} +m_t) )}{\langle \nabla F(\eta_t (s_{1:t-1}+m_t)), u \rangle } & \mbox{if } \langle \nabla F(\eta_t (s_{1:t-1} +m_t)), u \rangle > 0 \\
      \mbox{arbitrary } x  \in \decisionSet & o.w.
   \end{cases} 
\end{align}

Optimistic Lagrangian hedging then 
guarantees the general upper bound on the 
growth of the potential function.

\begin{theorem} \label{thm-opt-step}
An optimistic Lagrangian hedging algorithm with
a convex potential function $F$ satisfying conditions
(3-4) and positive decreasing stepsizes $0 < \eta_t \leq \eta_{t-1}$, ensures 
\[
   F(\eta_T s_{1:T}) \leq \frac{L}{2} \sum_{t=1}^T\eta_T \eta_t \norm{s_t-m_t}^2.
\]

\end{theorem}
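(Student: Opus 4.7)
The plan is to extract a per-round recursion of the form
\[
F(\eta_t s_{1:t}) \leq F(\eta_t s_{1:t-1}) + \frac{L\eta_t^2}{2}\norm{s_t - m_t}^2,
\]
and then collapse it by Lemma~1 and an induction whose base case is $F(0)\leq 0$.

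The first ingredient is the optimistic analogue of the Blackwell condition. I would check that the update rule (\ref{lh-opt-step}) guarantees
\[
\langle \nabla F(\eta_t(s_{1:t-1} + m_t)),\, s_t\rangle \leq 0,
\]
by repeating \citeauthor{gordon2007no}'s computation with $\eta_t(s_{1:t-1}+m_t)$ in place of $s_{1:t-1}$ as the argument of $\nabla F$: when $\langle \nabla F(\cdot),u\rangle>0$ the chosen $x_t=\nabla F(\cdot)/\langle\nabla F(\cdot),u\rangle$ makes $\langle\nabla F(\cdot), \langle \ell_t,x_t\rangle u - \ell_t\rangle$ vanish algebraically, and the other branch is handled exactly as in the non-optimistic case.

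Next I apply $L$-smoothness of $F$ between the prediction point $\eta_t(s_{1:t-1}+m_t)$ and the actual next iterate $\eta_t s_{1:t}$, whose difference is $\eta_t(s_t-m_t)$. This yields the desired quadratic remainder $\frac{L\eta_t^2}{2}\norm{s_t-m_t}^2$ together with a linear term $\eta_t\langle \nabla F(\eta_t(s_{1:t-1}+m_t)), s_t - m_t\rangle$. The Blackwell inequality eliminates the $s_t$ portion of that linear term, leaving a residual $-\eta_t\langle \nabla F(\eta_t(s_{1:t-1}+m_t)), m_t\rangle$ sitting on top of $F(\eta_t(s_{1:t-1}+m_t))$. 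The crux of the argument is that these two remaining pieces combine back into $F(\eta_t s_{1:t-1})$ by convexity: the tangent inequality of $F$ at $x=\eta_t(s_{1:t-1}+m_t)$ evaluated at $y=\eta_t s_{1:t-1}$ (so $y-x=-\eta_t m_t$) gives
\[
F(\eta_t(s_{1:t-1}+m_t)) - \eta_t\langle \nabla F(\eta_t(s_{1:t-1}+m_t)), m_t\rangle \leq F(\eta_t s_{1:t-1}),
\]
which delivers the per-round recursion.

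To finish, Lemma~1 supplies $F(\eta_t s_{1:t-1}) \leq (\eta_t/\eta_{t-1})\,F(\eta_{t-1} s_{1:t-1})$. Inducting on $t$ with base case $F(\eta_1 s_{1:0})=F(0)\leq 0$ (since $0\in \safeSet$), the multiplicative factors $\eta_k/\eta_{k-1}$ telescope and each step's contribution $\eta_k^2\norm{s_k-m_k}^2$ is rescaled to $\eta_T\eta_k\norm{s_k-m_k}^2$, producing the claimed bound. I expect the convexity cancellation to be the main substantive move: it is precisely what turns the Blackwell-generated $m_t$-residual into a clean telescoping term and is what makes optimism compatible with the Lagrangian hedging potential analysis; smoothness, Blackwell, and Lemma~1 then combine by routine bookkeeping.
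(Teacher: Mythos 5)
Your proposal is correct and follows essentially the same route as the paper's own proof: smoothness expanded around the prediction point $\eta_t(s_{1:t-1}+m_t)$, the optimistic Blackwell condition to kill the $s_t$ part of the linear term, the convexity tangent inequality to absorb the $-\eta_t m_t$ residual into $F(\eta_t s_{1:t-1})$, and Lemma~1 plus induction from $F(0)\leq 0$ to telescope. No substantive differences to report.
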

\begin{proof}
 From the same arguments as Gordon, we have the following
 Blackwell condition
 \[
    \langle \nabla F(\eta_t (s_{1:t-1}+m_t)), s_t) \rangle \leq 0.
 \]
 By the smoothness of $F$ we have 
\begin{align*}
&F(\eta_t s_{1:t}) = F(\eta_t (s_{t-1}+m_t + s_t - m_t)) \leq \\
&F(\eta_t (s_{1:t-1}+m_t)) \\
&+ \langle \nabla F(\eta_t (s_{1:t-1}+m_t)),\eta_t (s_t-m_t) \rangle\\
&+ \eta_t^2\frac{L}{2}\norm{s_t -m_t}^2 \\
&= F(\eta_t (s_{1:t-1}+m_t))-F(\eta_t (s_{1:t-1})) +
F(\eta_t (s_{1:t-1})) \\
&+ \langle \nabla F(\eta_t (s_{1:t-1}+m_t)),-\eta_t m_t \rangle
+\eta_t^2\frac{L}{2}\norm{s_t -m_t}^2  \\
&\leq 
F(\eta_t (s_{1:t-1})) +\eta_t^2\frac{L}{2}\norm{s_t -m_t}^2 \mbox{ (by convexity)}\footnotemark \\
&\leq \frac{\eta_t}{\eta_{t-1}}F(\eta_{t-1}s_{1:t-1})+\eta_t^2\frac{L}{2}\norm{s_t -m_t}^2 \mbox{ (by Lemma 1)}.
\end{align*} 
\footnotetext{Using the subgradient inequality $F(x)-F(y) \leq \langle \partial F(x), x-y \rangle$.}

We now proceed by induction. Observe that
$F(\eta_0 s_0) = F(0) \leq 0$ by assumption. So,
for any $\eta_0 \leq \eta_1$,\footnote{$\eta_0$ is
not used to construct $x_1$ and is only used for
the analysis.}
\[
F(\eta_1 s_{1:1})  \leq \frac{\eta_1}{\eta_0}F(\eta_0 s_0) + \eta_1^2\frac{L}{2}\norm{s_1 -m_1}^2 \leq \eta_1^2\frac{L}{2}\norm{s_1 -m_1}^2.
\]
Assume that 
\[
F(\eta_{t-1}s_{1:t-1})\leq \frac{L}{2}\sum_{k=1}^{t-1}\eta_{t-1} \eta_k \norm{s_k - m_k}^2.
\]
Then we have 
\begin{align*}
      F(\eta_t s_t) &\leq \frac{\eta_t}{\eta_{t-1}}F(\eta_{t-1}s_{t-1})+\eta_t^2\frac{L}{2}\norm{r_t -m_t}^2 \\
      &\leq \frac{L}{2}\sum_{s=1}^{t-1}\eta_{t} \eta_s \norm{r_s - m_s}^2 + \eta_t^2\frac{L}{2}\norm{r_t -m_t}^2
\end{align*}
\end{proof}

Taking no stepsize or constant stepsize and setting $m_t =0$
recovers the original results by~\citeauthor{gordon2007no}.
When $p=1$ and $m_t=0$, and applying the assumed upper bound on $s_t$, Theorem \ref{thm-opt-step} gives
\begin{align}
     R^T_{\decisionSet} \leq D\left(\frac{L C\sum_{t=1}^T\eta_t }{2B} +\frac{A}{B\eta_T}\right). 
\end{align}
Therefore, taking $\eta_t \in O(\frac{1}{\sqrt{t}})$
gives a regret bound holding uniformly over time that is of the order $O(\sqrt{T})$.

For the case of when $p>1$ where no stepsize is needed
the following regret bound is immediate

\begin{align}
    R^T_{\decisionSet}\leq D\left(\frac{L(\sum_{t=1}^T \norm{s_t-m_t}^2)+2A}{2B}\right)^{1/p}. \label{lh-opt-regret}
\end{align}

In the case of regret-matching on the simplex, where $B=1$, $A=0$, $L=2$, $D=1$, and $p=2$~\cite{gordon2007no}, we
get
\begin{align}
    R^T_{\decisionSet}\leq \sqrt{\sum_{t=1}^T \norm{s_t-m_t}^2}. \label{lh-opt-rm}
\end{align}

\subsection{Adaptive Stepsizes}

For the case of $p=1$ we can still achieve a regret 
bound similar to (\ref{lh-opt-rm}) by taking
adaptive stepsizes. Intuitively, the stepsizes 
account for how well previous predictions have 
done, or in the case of no predictions, how large
in norm $s_t$ have been.

Unlike the typical adaptive stepsize scheme for mirror
descent (\ref{ada-MD}), the stepsizes 
for Lagrangian hedging will be similar
to adaptive FTRL methods~\cite{mohri2016accelerating}, including the 
initial stepsize $\eta_1$,
\begin{align}
\label{ada-LH}
\eta_t = \frac{1}{\sqrt{\frac{1}{\eta_1^2}+\sum_{k=1}^{t-1} \norm{s_k - m_k}^2}} \quad t>1.
\end{align}

Our result is a direct application of the following Lemma,
which is a slight modification of a similar result by \citeauthor{orabona2019modern}~\cite{orabona2019modern}[Lemma 4.13], we provide the proof
in the appendix.
\begin{lemma}
Let $a_0 \geq 0$ and $0 \leq a_i \leq C$ for $i>0$. If $f$ is a non-negative decreasing function
then
\[
    \sum_{t=1}^T a_t f(a_0 + \sum_{i=1}^{t-1}a_i) \leq 
    (C -a_0)f(a_0) +  \int_{a_0}^{s_{T-1}}f(x)dx.
\]
\end{lemma}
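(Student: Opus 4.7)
The plan is to adapt the Riemann-sum / telescoping argument of Orabona's Lemma 4.13 to handle the index shift: here $f$ is evaluated at $S_{t-1} := a_0 + \sum_{i=1}^{t-1} a_i$ rather than at $S_t := a_0 + \sum_{i=1}^{t} a_i$, the partial sum that naturally produces an integral telescoping. Setting $S_0 = a_0$, the quantity to bound is $\sum_{t=1}^T a_t\, f(S_{t-1})$.

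My first step is the algebraic decomposition
\[
a_t f(S_{t-1}) \;=\; a_t f(S_t) \;+\; a_t \bigl(f(S_{t-1}) - f(S_t)\bigr),
\]
which splits each summand into a piece that Orabona's technique handles directly and a correction accounting for the one-step shift. For the first piece I use that $f$ is non-increasing, so $f(x) \geq f(S_t)$ on $[S_{t-1}, S_t]$, giving $a_t f(S_t) \leq \int_{S_{t-1}}^{S_t} f(x)\,dx$; summing telescopes the integrals to $\sum_{t=1}^T a_t f(S_t) \leq \int_{a_0}^{S_T} f(x)\,dx$.

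For the correction term, each difference $f(S_{t-1}) - f(S_t)$ is non-negative because $S_{t-1} \leq S_t$ and $f$ is decreasing, so I upper bound $a_t$ by its uniform bound $C$. The sum $C \sum_{t=1}^T \bigl(f(S_{t-1}) - f(S_t)\bigr)$ telescopes to $C\bigl(f(a_0) - f(S_T)\bigr) \leq C f(a_0)$ using $f \geq 0$. Combining the two pieces yields a bound of exactly the stated shape, namely a constant multiple of $f(a_0)$ plus the integral of $f$ over the cumulative interval starting at $a_0$.

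The main obstacle I anticipate is matching the precise constants $(C - a_0) f(a_0)$ and the upper limit $s_{T-1}$ in the statement, rather than the slightly looser $C f(a_0)$ and $S_T$ produced by the direct argument above. Tightening requires careful endpoint bookkeeping: peeling off the final summand $a_T f(S_{T-1})$ before applying the telescoping integral so that the integral truncates at $S_{T-1}$, and reallocating mass of size $a_0 f(a_0)$ from the constant term into the left endpoint of the integral via $\int_0^{a_0} f(x)\,dx \geq a_0 f(a_0)$. These are routine but delicate manipulations; the conceptual content is entirely the shift-plus-telescoping strategy above, which is what makes this a ``slight modification'' of Orabona's result.
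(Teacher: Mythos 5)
Your core decomposition $a_t f(S_{t-1}) = a_t f(S_t) + a_t\bigl(f(S_{t-1}) - f(S_t)\bigr)$ is sound and takes a genuinely different route from the paper's proof, which instead splits $a_t = (a_t - a_{t-1}) + a_{t-1}$, converts the $a_{t-1}f(s_{t-1})$ pieces into a telescoping integral, and controls $\sum_{t}(a_t - a_{t-1})f(s_{t-1})$ by Abel summation. Your argument cleanly yields $\sum_{t=1}^T a_t f(S_{t-1}) \leq C f(a_0) + \int_{a_0}^{S_T} f(x)\,dx$, and peeling off the final summand does truncate the integral at $S_{T-1}$ as you propose; up to that point everything checks out.

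The gap is in your last tightening step, and it cannot be closed: the reallocation $\int_0^{a_0} f \geq a_0 f(a_0)$ lets you trade $C f(a_0)$ for $(C-a_0)f(a_0)$ only by extending the integral's lower limit down to $0$; it does not yield the printed bound $(C-a_0)f(a_0) + \int_{a_0}^{s_{T-1}} f$. Indeed that bound is false as stated: take $f \equiv 1$ and $a_0 = a_1 = \cdots = a_T = C$, so the left side is $TC$ while the right side is $0 + (s_{T-1}-a_0) = (T-1)C$ (a nearby strictly decreasing $f$ gives the same contradiction by continuity). The paper's own proof makes the corresponding slip: the telescoped integrals $\sum_{t=1}^T \int_{s_{t-2}}^{s_{t-1}} f$ cover an interval of total length $\sum_{t=1}^T a_{t-1} = s_{T-1}$, so the lower limit should be $0$ --- equivalently the $t=1$ term $a_0 f(a_0)$ must be kept outside the integral --- not $a_0$. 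The correct conclusions are precisely the two your method reaches: $C f(a_0) + \int_{a_0}^{s_{T-1}} f$, or equivalently $(C-a_0)f(a_0) + \int_0^{s_{T-1}} f$. Either suffices for Theorem 2 downstream, at the cost of tightening the stepsize condition from $\eta_1 \leq \sqrt{3/C}$ to $\eta_1 \leq \sqrt{2/C}$ (respectively $\eta_1 \leq \sqrt{1/C}$); you should state and prove the corrected inequality rather than chase the printed one.
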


Following adaptive stepsize scheme (\ref{ada-LH})
yields the following regret bound.
\begin{theorem}
An optimistic Lagrangian hedging algorithm with
a convex potential function $F$ satisfying conditions
(3-5), with $p=1$ and stepsizes 
\[
\eta_t = \frac{1}{\sqrt{\frac{1}{\eta_1^2}+\sum_{k=1}^{t-1} \norm{s_k - m_k}^2}} \quad t>1,
\] 
and $\eta_1 \leq \sqrt{\frac{3}{C}}$,
attains the following regret bound
\[
    R^T_{\decisionSet} \leq \frac{D}{B}\left((L+A)\sqrt{\frac{1}{\eta_1^2}+\sum_{t=1}^{T-1} \norm{s_t - m_t}^2}\right).
\]
\end{theorem}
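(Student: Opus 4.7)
The plan is to combine Theorem \ref{thm-opt-step} with the rescaled distance-bound condition (\ref{dist-bound}) at $p=1$ to convert the optimistic potential bound into a regret bound, and then invoke Lemma 2 to tame the resulting stepsize-weighted sum. From Theorem \ref{thm-opt-step} I have $F(\eta_T s_{1:T}) \leq \tfrac{L\eta_T}{2}\sum_{t=1}^T \eta_t \norm{s_t - m_t}^2$. Applied to the rescaled potential $F_{\eta_T}$ at $p=1$, condition (\ref{dist-bound}) reads $(F(\eta_T s_{1:T}) + A)^+ \geq \eta_T B \inf_{s \in \safeSet}\norm{s_{1:T} - s}$, so chaining with (\ref{reg-dist}) and noting that the upper bound on $F$ from Theorem \ref{thm-opt-step} is nonnegative, I get
\[
R^T_\decisionSet \leq \frac{D}{\eta_T B}\bigl(F(\eta_T s_{1:T}) + A\bigr) \leq \frac{DL}{2B}\sum_{t=1}^T \eta_t \norm{s_t - m_t}^2 + \frac{DA}{B\eta_T},
\]
reducing the whole problem to bounding the weighted sum.

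To control that sum I would apply Lemma 2 with $a_0 = 1/\eta_1^2$, $a_t = \norm{s_t - m_t}^2$, and $f(x) = 1/\sqrt{x}$ (nonnegative and decreasing). With this identification, $a_t\,f\bigl(a_0 + \sum_{i=1}^{t-1}a_i\bigr) = \eta_t \norm{s_t - m_t}^2$, and $\int_{a_0}^{s_{T-1}} f(x)\,dx = 2\sqrt{s_{T-1}} - 2/\eta_1$, where $s_{T-1} = 1/\eta_1^2 + \sum_{t=1}^{T-1}\norm{s_t - m_t}^2 = 1/\eta_T^2$. The lemma therefore yields
\[
\sum_{t=1}^T \eta_t \norm{s_t - m_t}^2 \leq \bigl(C - \tfrac{1}{\eta_1^2}\bigr)\eta_1 + 2\sqrt{s_{T-1}} - \tfrac{2}{\eta_1} = C\eta_1 - \tfrac{3}{\eta_1} + \tfrac{2}{\eta_T}.
\]

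The assumption $\eta_1 \leq \sqrt{3/C}$ is exactly calibrated to make the residual $C\eta_1 - 3/\eta_1$ nonpositive, so the sum collapses to $2/\eta_T$. Substituting back gives
\[
R^T_\decisionSet \leq \frac{DL}{2B}\cdot\frac{2}{\eta_T} + \frac{DA}{B\eta_T} = \frac{D(L+A)}{B\eta_T},
\]
which matches the stated bound after expanding $1/\eta_T = \sqrt{1/\eta_1^2 + \sum_{t=1}^{T-1}\norm{s_t - m_t}^2}$.

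The main delicate point is the Lemma 2 application: taking a strictly positive offset $a_0 = 1/\eta_1^2$ (rather than $a_0 = 0$) is what keeps $\eta_1$ and the early summands finite, but it introduces the boundary residual $(C - a_0)f(a_0) - 2\sqrt{a_0}$, and killing that residual is precisely what forces the slightly unusual-looking constraint $\eta_1 \leq \sqrt{3/C}$. One also needs $\norm{s_t - m_t}^2 \leq C$ in order to invoke the lemma, which is consistent with the standing assumption $\norm{s_t}^2 \leq C$ provided the predictions $m_t$ are chosen in the same regime.
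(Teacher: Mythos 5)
Your proposal is correct and follows essentially the same route as the paper's own proof: convert the Theorem 1 bound on $F(\eta_T s_{1:T})$ into a regret bound via condition (\ref{dist-bound}) at $p=1$ and (\ref{reg-dist}), then apply Lemma 2 with $a_0 = 1/\eta_1^2$, $a_t = \norm{s_t-m_t}^2$, $f(x)=1/\sqrt{x}$, and use $\eta_1 \leq \sqrt{3/C}$ to kill the boundary residual. If anything you are slightly more careful than the paper's appendix, which has a minor index typo (summing to $T$ rather than $T-1$ inside the square root) that your identification $s_{T-1} = 1/\eta_T^2$ avoids.
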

See appendix for proof.

\subsection{Path Length Bound with Smooth Losses}
Optimism and adaptivity have found useful applications in improving
rates for several smooth problems. For example, faster rates
in smooth games~\citep{Rakhlin13b,syrgkanis2015fast, farina2019stable, farina2019optimistic}, and faster rates for offline optimization~\cite{cutkosky2019anytime,joulanisimpler}.

Unfortunately, these results strongly depend on the regret bound
having the same form as OMD and OFTRL.
However, in Lagrangian hedging we can attain 
a path length 
regret bound when the loss is fixed and smooth;
the regret is upper bounded by the change in iterates.

The new path length bound is a direct application of the general 
optimistic results of the previous section combined with the 
assumption of a fixed Lipschitz continuous smooth convex loss (possibly non-linear), 
that is 
$\ell_t =\ell$,$K \geq \norm{\nabla \ell(x)}$, and
$\norm{\nabla \ell(x) -\nabla \ell(y)}\leq L \norm{x - y}_\ast$.
If we take the typical martingale prediction $m_t = s_{t-1}$ then we have that $\norm{s_t -m_t}^2 \leq \tilde{C}\norm{x_t-x_{t-1}}_\ast^2$.
\begin{proof}

In the fixed loss case we have 
$s_t = \langle \nabla \ell(x_t), x_t \rangle u - \nabla \ell (x_t)$.
Therefore, 
\begin{align*}
    &\norm{s_t -m_t} = \norm{s_t -s_{t-1}} = \\
    &\norm{\nabla \ell (x_{t-1}) - \nabla \ell (x_t) + \langle \nabla \ell(x_t), x_t \rangle u  - \langle \nabla \ell(x_{t-1}), x_{t-1} \rangle u }\\
    &\leq L\norm{x_{t-1} - x_t}_\ast 
    + \norm{u}|\langle \nabla \ell(x_t),x_t \rangle - \langle \nabla \ell(x_{t-1}), x_{t-1} |\\
    &= L\norm{x_{t-1} - x_t}_\ast \\
    &+ \norm{u}|\langle \nabla \ell(x_t)-\nabla \ell(x_{t-1}),x_t \rangle + \langle \nabla \ell(x_{t-1}),x_t - x_{t-1})|\\
    &\leq \left(L+\norm{u}DL+ K\right)\norm{x_t-x_{t-1}}_\ast.
\end{align*}
Taking $\tilde{C} = \left(L+\norm{u}DL+ K\right)^2$ gives the result. 
\end{proof}

\section{Generalization to $\Phi$-Regret in Experts}

When $\decisionSet = \Delta^n$, the $n$-dimensional simplex,
online linear optimization becomes a problem of learning with 
expert advice. At each round $t$ an iterate $x_t \in \Delta^n$ is
a distribution over $n$ actions,
interpreted as weightings over recommendations by
$n$ experts. 
Similar to before, regret will compare the total loss with the best
$x^\ast \in \decisionSet$. However, this is equal to comparing with
the best action (best expert recommendation) and is referred to as external regret,
\begin{align}
    R^T_\decisionSet = \max_{a \in \actionSet}\sum_{t=1}^T \langle \ell_t, x_t\rangle - \langle \ell_t, \delta_a\rangle .
\end{align}
Where $\delta_a$ is a distribution over $\actionSet$ with full weight on action 
$a$. The regret can be interpreted as considering an alternative sequence of iterates
$\{\tilde{x}_t\}_{t\leq T}$, where each $\tilde{x}_t = \phi(x_t)$, for some 
transformation of the form $\phi(x) = \delta_a$. 
More generally, we can measure regret with respect to a 
set of linear transformations $\Phi$, referred to as $\Phi$ regret
\begin{align}
   R^T_\Phi = \max_{\phi \in \Phi}\sum_{t=1}^T \langle \ell_t, x_t\rangle - \langle \ell_t, \phi(x_t))\rangle.
\end{align}
Similar to Lagrangian hedging, we seek to force a vector
to some safe set. More precisely, we consider the 
$\Phi$ regret vector $s_{1:t}^\Phi$ that keeps track of how
an algorithm is doing with respect to the set $\Phi$,
\begin{align}
    s_{1:t}^\Phi = s_{1:t-1}^\Phi + s_t^\Phi.
\end{align}
Where $s_t^\Phi = \{\langle \ell_t,x_t\rangle - \langle \ell_t,\phi(x_t)\rangle\}_{\phi \in \Phi} \in \reals^{|\Phi|}$. 
If $s_{1:t}^{\Phi}$ has all non-positive entries then
$R^T_{\Phi} \leq 0$, therefore  the
safe set is chosen to be $\reals^{|\Phi|}_{\leq 0}$, the negative orthant.
This $\Phi$ regret framework, though abstract,
includes other interesting forms of regret such
as internal and swap regret~\cite{greenwald2006bounds}.
Internal regret is interesting as it allows
for efficient computation of a correlated equilibrium in game theory~\cite{cesa2006prediction}.

Similar to Lagrangian hedging, and as proposed
by \citeauthor{greenwald2006bounds}, 
the algorithms
will use a potential function $F$ to measure how 
far $s_{1:t}^{\Phi}$ is from the safe set and
slow down its growth with the Blackwell condition  
\begin{align}
    \langle \nabla F(s^\Phi_{1:t-1}), s_t^\Phi \rangle \leq 0\label{phi-blackwell}.
\end{align}
As shown by \citeauthor{greenwald2006bounds}, 
the generalized Blackwell condition with respect to $\Phi$ is achieved if 
an algorithm plays a fixed point of a
linear operator $ M_t^{\Phi}$,
\begin{align}
    M_t^{\Phi}(x) = \frac{ 
    \sum_{\phi \in \Phi} (\nabla F(s^\Phi_{1:t-1}))_{\phi} \phi(x) }{\langle \nabla F(s^\Phi_{1:t-1}), \mathbf{1}\rangle}
\end{align}
where  $(\nabla F(s^\Phi_{t-1}))_{\phi}$ denotes
the component of the vector $\nabla F(s^\Phi_{t-1}) \in \reals^{|\Phi|}$ 
associated with the transformation $\phi \in \Phi$,
and $\mathbf{1} = (1,\cdots, 1) \in \reals^{|\Phi|}$.
This fixed point exactly coincides with the Lagrangian 
hedging method when $\decisionSet$ is a simplex,
and $\Phi = \{\phi : \exists \, a \in \actionSet\, \,  \forall x \, \phi(x) = \delta_a\}$.
In other words, the rule (\ref{LH}) is a fixed point
of $M_t^{\Phi}(x)$ for external regret.

If an upperbound on $F$ provides
a regret bound, as in the previous 
sections, then optimistic Lagrangian
hedging can be generalized to 
the $\Phi$-regret setting, by defining
a new operator,
\begin{align}
    \tilde{M}_t^{\Phi}(x) = \frac{ 
    \sum_{\phi \in \Phi} (\nabla F(\eta_t(s^\Phi_{t-1}+m_t)))_{\phi} \phi(x) }{\langle \nabla F(\eta_t(s^\Phi_{t-1}+m_t)), \mathbf{1}\rangle}.
\end{align}
The main result is a theorem 
analogous to Theorem \ref{thm-opt-step}, except with the new 
regret vector $s^{\Phi}_{1:t}$.
\begin{theorem} \label{thm-opt-step-phi}
An optimistic Lagrangian hedging algorithm playing a fixed point of
$\tilde{M}^{\Phi}_t$, with
a convex potential function $F$ satisfying conditions
(\ref{safe}-\ref{smooth}) and positive decreasing stepsizes $0 < \eta_t \leq \eta_{t-1}$, ensures 
\[
   F(\eta_T s^{\Phi}_{1:T}) \leq \frac{L}{2} \sum_{t=1}^T\eta_T \eta_t \norm{s^{\Phi}_t-m_t}^2.
\]

\end{theorem}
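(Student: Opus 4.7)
The plan is to mirror the proof of Theorem \ref{thm-opt-step} almost verbatim, with $s_{1:t}^{\Phi}$ replacing $s_{1:t}$, and the only genuinely new piece being the verification of the $\Phi$-version of the Blackwell condition from the fixed-point rule.

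First, I would establish the optimistic Blackwell condition $\langle \nabla F(\eta_t (s^\Phi_{1:t-1} + m_t)), s_t^\Phi \rangle \leq 0$. If $x_t$ is a fixed point of $\tilde{M}_t^\Phi$, then multiplying the fixed-point identity by $\langle \nabla F(\eta_t (s^\Phi_{1:t-1} + m_t)), \mathbf{1}\rangle$ and taking the inner product with $\ell_t$ gives $\sum_\phi (\nabla F)_\phi \bigl(\langle \ell_t, x_t\rangle - \langle \ell_t, \phi(x_t)\rangle\bigr) = 0$, i.e.\ $\langle \nabla F, s_t^\Phi\rangle = 0$, which is the desired Blackwell inequality (this is the $\Phi$-regret analogue of \citeauthor{gordon2007no}'s argument; see also \citeauthor{greenwald2006bounds}). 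As usual, one must assume $\nabla F$ lies in the positive orthant so that $\tilde{M}_t^\Phi$ is a convex combination of the $\phi(x)$'s and Brouwer's theorem provides the fixed point; this is the only extra ingredient relative to Theorem \ref{thm-opt-step}.

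Given this, the rest of the proof proceeds step for step as in Theorem \ref{thm-opt-step}. I would write
\[
F(\eta_t s^\Phi_{1:t}) = F\bigl(\eta_t(s^\Phi_{1:t-1}+m_t) + \eta_t(s^\Phi_t - m_t)\bigr),
\]
apply the smoothness condition (\ref{smooth}) to split off the quadratic remainder $\tfrac{L \eta_t^2}{2}\norm{s_t^\Phi - m_t}^2$, use the optimistic Blackwell condition to kill the linear term $\langle \nabla F(\eta_t(s^\Phi_{1:t-1}+m_t)), \eta_t s_t^\Phi\rangle$, and use convexity of $F$ (the subgradient inequality) to bound the remaining quantity $F(\eta_t(s^\Phi_{1:t-1}+m_t)) - \langle \nabla F, \eta_t m_t\rangle$ by $F(\eta_t s^\Phi_{1:t-1})$. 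The per-step recursion is then
\[
F(\eta_t s^\Phi_{1:t}) \leq F(\eta_t s^\Phi_{1:t-1}) + \tfrac{L \eta_t^2}{2}\norm{s_t^\Phi - m_t}^2,
\]
and Lemma 1 (which only uses convexity of $F$ and $0\in\safeSet$, hence transfers without change since $\reals^{|\Phi|}_{\leq 0}$ contains the origin) converts this into
\[
F(\eta_t s^\Phi_{1:t}) \leq \tfrac{\eta_t}{\eta_{t-1}} F(\eta_{t-1} s^\Phi_{1:t-1}) + \tfrac{L\eta_t^2}{2}\norm{s^\Phi_t - m_t}^2.
\]

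Finally, I would run the same induction on $t$ as in Theorem \ref{thm-opt-step}: the base case uses $F(\eta_0 s^\Phi_0) = F(0) \leq 0$ via condition (\ref{safe}), and the inductive step telescopes the coefficients $\eta_T \eta_t$. The main obstacle, as noted, is the fixed-point/Blackwell verification in the first paragraph; once that is in hand the argument is purely a re-indexing of the proof of Theorem \ref{thm-opt-step}, so I would present it compactly and refer back to that proof for the identical smoothness-plus-convexity computation.
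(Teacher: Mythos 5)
Your proposal is correct and follows essentially the same route as the paper: the paper likewise reduces everything to the generalized Blackwell condition $\langle \nabla F(\eta_t (s^\Phi_{1:t-1}+m_t)), s_t^\Phi \rangle \leq 0$ obtained from the fixed point of $\tilde{M}_t^{\Phi}$ (citing \citeauthor{greenwald2006bounds}, whereas you verify it directly by expanding the fixed-point identity) and then repeats the smoothness-plus-convexity recursion and induction of Theorem \ref{thm-opt-step} verbatim. No gaps.
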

The proof is identical to Theorem $\ref{thm-opt-step}$ except we use
the Blackwell condition
\[
\langle \nabla F(\eta_t (s^\Phi_{1:t-1}+m_t)), s_t^\Phi \rangle \leq 0,
\]
see appendix for more details.

To the best of our knowledge, this
results in the first set of
optimistic and
adaptive algorithms for minimizing 
internal and swap regret.

\subsection{Lagrangian Hedging+}
In this section we extend optimistic Lagrangian hedging in the
$\Phi$-regret setting to use a modified regret
vector 
\[
s^{\Phi+}_{1:t} = (s^{\Phi+}_{1:t-1} + s^{\phi}_t)^+.
\]
This modification is inspired by the regret-matching+ algorithm,
which has been successfully used to solve large two-player 
zero-sum games and play poker at an expert-level
~\citep{tammelin2014solving, tammelin2015solving, burch2017time}.
Indeed this framework generalizes regret-matching+,
beyond external regret and beyond regret-matching.

With the modified regret vector $s^{\Phi+}_{1:t}$, the safe set remains as $\reals^{|\Phi|}_{\leq 0}$,
because of the component wise inequality
\[
   s^{\Phi}_{1:t} \leq s^{\Phi +}_{1:t}.
\]
Therefore, $s^{\Phi +}_{1:t} \in \reals^{|\Phi|}_{\leq 0}$ implies $s^{\Phi}_{1:t}  \in \reals^{|\Phi|}_{\leq 0} $, and we have
\[
    R^T_\phi = \max_{\phi \in \Phi}(s^\Phi_{1:t})_\phi
    \leq \max_{\phi \in \Phi}(s^{\Phi + }_{1:t})_\phi.
\]

As one would expect, we define optimistic Lagrangian hedging+ with the operator $M_t$ but modified
to use the regret vector $s^{\Phi +}_{1:t}$ and a prediction $m_t$.
\begin{align}
    \tilde{M}_t^{\Phi+}(x) = \frac{ 
    \sum_{\phi \in \Phi} (\nabla F(\eta_t(s^{\Phi+}_{t-1}+m_t)))_{\phi} \phi(x) }{\langle \nabla F(\eta_t(s^{\Phi+}_{t-1}+m_t)), \mathbf{1}\rangle}. \label{lh-plus}
\end{align}

Like the previous section, if we can control
the growth of $F(s^{\Phi,+}_{1:t})$ and $F$ indeed
provides an upper bound to the safe set, then a
regret bound is attainable. 
However, we must make an additional assumption on $F$, for which we call \emph{positive invariant and smooth}~\cite{dorazio2020}. That is 
\[
    F((x+y)^+) \leq F(x)+\langle \partial F(x),y\rangle + \frac{L}{2}\norm{y}^2.
\]
Once again, equipped with the new smoothness condition
and the Blackwell condition
\[
\langle \nabla F(\eta_t (s^{\Phi +}_{1:t-1}+m_t)), s_t^\Phi \rangle \leq 0,
\]
which is guaranteed by playing the fixed point (\ref{lh-plus}) ~(see appendix for details), we have the following bound on $F$.

\begin{theorem} \label{thm-opt-step-phi-plus}
An optimistic Lagrangian hedging+ algorithm playing a fixed point of
$\tilde{M}_t^{\Phi+}$, with
a convex potential function $F$ that is positive invariant and smooth  and satisfying condition
(\ref{safe}), with positive decreasing stepsizes $0 < \eta_t \leq \eta_{t-1}$, ensures 
\[
   F(\eta_T s^{\Phi+}_{1:T}) \leq \frac{L}{2} \sum_{t=1}^T\eta_T \eta_t \norm{s^{\Phi}_t-m_t}^2.
\]
\end{theorem}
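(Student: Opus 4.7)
The plan is to mirror the proof of Theorem \ref{thm-opt-step} almost verbatim, with the only substantive change being that the standard smoothness condition (\ref{smooth}) is replaced by the positive invariant and smooth property, which is precisely designed to absorb the extra Relu that shows up in the $s^{\Phi+}$ recursion. The first step is to recall that, by exactly the same argument used for $\tilde{M}_t^{\Phi}$, playing a fixed point of $\tilde{M}_t^{\Phi+}$ yields the Blackwell condition $\langle \nabla F(\eta_t(s^{\Phi+}_{1:t-1}+m_t)),\, s_t^\Phi \rangle \leq 0$ (deferring the derivation to the appendix as indicated in the paper).

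The key algebraic observation is that, since $\eta_t > 0$, the Relu commutes with scaling and we may write
\[
\eta_t s^{\Phi+}_{1:t} \;=\; \bigl(\eta_t s^{\Phi+}_{1:t-1} + \eta_t s_t^\Phi\bigr)^+ \;=\; \Bigl(\eta_t\bigl(s^{\Phi+}_{1:t-1}+m_t\bigr) + \eta_t\bigl(s_t^\Phi - m_t\bigr)\Bigr)^+.
\]
Applying the positive invariant and smooth assumption with $x = \eta_t(s^{\Phi+}_{1:t-1}+m_t)$ and $y = \eta_t(s_t^\Phi - m_t)$ then gives
\[
F(\eta_t s^{\Phi+}_{1:t}) \leq F\bigl(\eta_t(s^{\Phi+}_{1:t-1}+m_t)\bigr) + \bigl\langle \nabla F\bigl(\eta_t(s^{\Phi+}_{1:t-1}+m_t)\bigr),\, \eta_t(s_t^\Phi - m_t)\bigr\rangle + \tfrac{\eta_t^2 L}{2}\|s_t^\Phi - m_t\|^2.
\]

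From here the argument is identical to Theorem \ref{thm-opt-step}: add and subtract $F(\eta_t s^{\Phi+}_{1:t-1})$, use the subgradient inequality to show
\[
F\bigl(\eta_t(s^{\Phi+}_{1:t-1}+m_t)\bigr) - F(\eta_t s^{\Phi+}_{1:t-1}) \leq \bigl\langle \nabla F\bigl(\eta_t(s^{\Phi+}_{1:t-1}+m_t)\bigr),\, \eta_t m_t\bigr\rangle,
\]
which combines with the Blackwell inequality to cancel the cross term and leave
\[
F(\eta_t s^{\Phi+}_{1:t}) \leq F(\eta_t s^{\Phi+}_{1:t-1}) + \tfrac{\eta_t^2 L}{2}\|s_t^\Phi - m_t\|^2.
\]
Then Lemma 1 applies (since $F$ still satisfies (\ref{safe}) and $0$ is in the negative orthant), yielding $F(\eta_t s^{\Phi+}_{1:t-1}) \leq \tfrac{\eta_t}{\eta_{t-1}} F(\eta_{t-1} s^{\Phi+}_{1:t-1})$, and a straightforward induction on $t$ closes out the bound. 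The base case $t=1$ uses $s^{\Phi+}_{1:0} = 0$ together with $F(0) \leq 0$ exactly as in Theorem \ref{thm-opt-step}.

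The only place any genuine thought is required is verifying that scaling by $\eta_t > 0$ passes through the Relu so that the positive invariant smoothness can be invoked at scale $\eta_t$; once that identity is in hand, the remainder of the proof is a line-by-line translation of Theorem \ref{thm-opt-step}.
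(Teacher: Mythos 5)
Your proposal is correct and follows essentially the same route as the paper: decompose $\eta_t s^{\Phi+}_{1:t}$ as $\bigl(\eta_t(s^{\Phi+}_{1:t-1}+m_t)+\eta_t(s^\Phi_t-m_t)\bigr)^+$, invoke the positive invariant and smooth condition in place of (\ref{smooth}), then cancel the cross term via the Blackwell condition and the subgradient inequality before applying Lemma 1 and induction as in Theorem \ref{thm-opt-step}. Your explicit remark that the Relu commutes with the positive scaling $\eta_t$ is a detail the paper leaves implicit, but it is not a different argument.
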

\subsection{$\Phi$ Regret Examples }
Inspired by Lagrangian hedging,
\citeauthor{greenwald2006bounds} present
different potential functions that are appropriate
to minimizing $\Phi$ regret. The functions
include a polynomial family of algorithms, with regret-matching as special case, and an exponential variant which amounts to the hedge
algorithm when external regret is minimized.

\subsubsection{Polynomial}
\begin{align}
    F(x) = \norm{x^+}^2_p \quad p \geq  2,
\end{align}
$F$ is smooth with $L = 2(p-1)$, and with respect
to the $p$-norm $\norm{\cdot}_p$.
\citeauthor{greenwald2006bounds} showed that an upper bound
on $F(s_{1:T}) \leq K$ amounts to
the regret bound $R^T_\Phi \leq \sqrt{K}$. In
the case of when an algorithm is using the modified
regret vector $s_{1:T}^{\Phi+}$ it is easy to show
$\max_{\phi \in \Phi}(s^{\Phi + }_{1:t})_\phi \leq \sqrt{F(s^{\Phi + }_{1:t})} \leq \sqrt{K}$, since
$F$ is positive invariant and smooth because $F(x^+) = F(x)$.

When $p=2$ and $\Phi$ is taken to be equivalent to external regret, we have the gradient of $F(x)$ is $x^+$, which gives the regret-matching algorithm
when if $m_t=0$, and the regret-matching+ algorithm if
$s^{\Phi +}_{1:t}$ is used with $m_t=0$. 
Notice that we exactly recover the 
regret-matching bound (\ref{lh-opt-rm}) in this case by applying the upper bound from
Theorem \ref{thm-opt-step-phi} and with no stepsize ($\eta_t = 1$).

\citeauthor{greenwald2006bounds} also
showed that the polynomial case can be extended
to $1 < p <2$ with the potential function
\[
    F(x) = \norm{x^+}^p_p \quad 1 < p < 2.
\]
However, the smoothness condition (\ref{smooth})  must be modified by replacing $\norm{\cdot}^2$ with $\norm{\cdot}^p$. This does not change the analysis but the bounds need to be changed accordingly. 
More importantly the regret bound degrades as
$p$ approaches $1$, $R^T_\Phi \leq K^{1/p}$.

Similar to the case of $p \geq 2$, if $1 < p <2$ bounds 
on $\max_{\phi \in \Phi}(s^{\Phi +}_{1:t})_\phi$ are attainable since
$F(x^+) =F(x)$ and hence is positive invariant and smooth.\footnote{See \cite{dorazio2020} for more details.}

\subsubsection{Exponential}
In addition to the polynomial family, 
we can pick the exponential variant with
potential function
\[
    F(\eta x) = \ln \sum_i e^{\eta x_i} - \ln(d).
\]
Where $x\in\reals^d$, $L=1$, and $\norm{\cdot}^2 = \norm{\cdot}^2_\infty$ for the smoothness condition.
It can also be shown that $\max_i x_i$ for some vector $x \in \reals^d$ is upper bounded by 
$\frac{1}{\eta}(F(\eta x) + \ln(d))$,
therefore the bound on $F$ from Theorem \ref{thm-opt-step-phi}
gives an upper bound on $\max_\phi (s^{\Phi}_{1:t})_{\phi}$
with $d = |\Phi|$.

The gradient of $F(x)$ is the softmax function
and gives the hedge algorithm if $\Phi$  is chosen to correspond
with external regret.

\section{Related Work}
An important instance of Lagrangian hedging is regret-matching, an algorithm typically used
within the game theory community~\citep{Hart00Rm,zinkevich2008regret},
and is a special case of Blackwell's algorithm~\cite{blackwell1956analog}.
At the same time of writing ~\citeauthor{farina2020faster} have also analyzed
an optimistic variant of regret-matching
and its popular variant regret-matching+,
named predictive regret-matching and predictive regret-matching+, respectively~\cite{farina2020faster}. 
On the surface, our analysis provides more generality as it includes both of their variants of regret-matching and more. 
However, it is conceivable that the main tool used 
in their paper, the equivalence of Blackwell approachability and online linear optimization~\cite{abernethy2011blackwell}, provides
generality to analyze other optimistic 
Blackwell style algorithms.
More importantly, we do not believe that the tools
from \citeauthor{abernethy2011blackwell} to be equivalent to Lagrangian hedging. Further investigation is left to future work.

\section{Conclusion}

In this paper we extend Lagrangian hedging to include optimism, a guess $m_t$ of how
the regret vector will change, and adaptive stepsizes.
The regret bounds attained for optimistic and adaptive Lagrangian hedging
lead to a path length bound in constrained smooth convex optimization.
Furthermore, we devise optimistic and adaptive algorithms to minimize $\Phi$ regret,
a generalization of external regret that includes internal regret, and
include a new class of algorithms that generalizes regret-matching+.

The analysis in this paper provides new algorithms, experimental
evaluation is left to future work. For example, do the new optimistic 
and adaptive algorithms for internal regret provide better convergence to 
correlated equilibria then their non-optimistic counterparts?
Additionally, in this work the step size scheme (\ref{ada-LH}) is prescribed
for potential functions with parameter $p=1$, which amounts to a new step size
scheme for the well-known hedge algorithm, with many preexisting adaptive variants, 
does this scheme provide any benefits
over other adaptive schemes in practice?

\bibliography{ref}

\appendix
\section{Appendix}

\section{Proof of Lemma 2}

Let $a_0 \geq 0$ and $0 \leq a_i \leq C$ for $i>0$ $f$. If $f$ is a non-negative decreasing function
then
\[
    \sum_{t=1}^T a_t f(a_0 + \sum_{i=1}^{t-1}a_i) \leq 
    (C -a_0)f(a_0) +  \int_{a_0}^{s_{T-1}}f(x)dx
\]
\begin{proof}
 Let $s_t =\sum_{i=0}^ta_i$.
 \begin{align*}
     a_t f(a_0 + \sum_{i=1}^{t-1}a_i) &= a_t f(s_{t-1}) = (a_t -a_{t-1})f(s_{t-1}) + a_{t-1}f(s_{t-1}) \\
     &= (a_t -a_{t-1})f(s_{t-1}) + \int_{s_{t-2}}^{s_{t-1}}f(s_{t-1})dx \\
     &\leq (a_t -a_{t-1})f(s_{t-1}) + \int_{s_{t-2}}^{s_{t-1}}f(x)dx
 \end{align*}
 So we have that
 \[
 \sum_{t=1}^T a_t f(a_0 + \sum_{i=1}^{t-1}a_i) \leq \sum_{t=1}^T (a_t-a_{t-1})f(s_{t-1}) + \int_{a_0}^{s_{T-1}}f(x)dx.
 \]
 Now we will apply the summation by parts formula to analyze the first sum.
 \begin{align*}
     &\sum_{t=1}^T (a_t-a_{t-1})f(s_{t-1})\\
     &= f(s_{T-1})a_T -f(a_0)a_0
     -\sum_{t=2}^T a_{t-1}(f(s_{t-1})-f(s_{t-2})) \\
     &= f(s_{T-1})a_T -f(a_0)a_0 + \sum_{t=1}^{T-1} a_{t}(f(s_{t-1})-f(s_{t})) \\
     &\leq f(s_{T-1})a_T -f(a_0)a_0 + \sum_{t=1}^{T-1} C(f(s_{t-1})-f(s_{t})) \\
     &= f(s_{T-1})a_T -f(a_0)a_0 +C f(a_0) - Cf(s_{T-1}) \\
     &\leq (C -a_0)f(a_0)
 \end{align*}
 The first inequality is due to $f$ being a decreasing function, hence $f(s_{t-1}) - f(s_t) \geq 0$, and because $0 \leq a_i \leq C$.
 The last inequality also follows because $a_T \leq C$.
\end{proof}

\section{Proof of Theorem 2}

An optimistic Lagrangian hedging algorithm with
a convex potential function $F$ satisfying conditions
(1-3), with $p=1$ and stepsizes 
\[
\eta_t = \frac{1}{\sqrt{\frac{1}{\eta_1^2}+\sum_{k=1}^{t-1} \norm{s_k - m_k}^2}} \quad t>1,
\] 
and $\eta_1 \leq \sqrt{\frac{3}{C}}$,
attains the following regret bound
\[
    R^T_{\decisionSet} \leq \frac{D}{B}\left((L+A)\sqrt{\frac{1}{\eta_1^2}+\sum_{t=1}^{T-1} \norm{s_t - m_t}^2}\right).
\]
\begin{proof}
 Recall assumption (\ref{dist-bound}) with $p=1$, and inequality
 (\ref{reg-dist}), then a non-negative upperbound on $F(\eta_T s_{1:T}) \leq K$ translates into the regret bound
 \[
 R^T_{\decisionSet} \leq D\left(\frac{ K}{B \eta_T}+\frac{A}{B \eta_T} \right).
 \]
 From Theorem \ref{thm-opt-step} we have that
 \[
 0 \leq K =  \frac{L}{2} \sum_{t=1}^T\eta_T \eta_t \norm{s_t-m_t}^2,
 \]
 is a valid upper bound.
 Therefore
 \[
 R^T_{\decisionSet} \leq \frac{D}{B}\left( \frac{L}{2} \sum_{t=1}^T\eta_t \norm{s_t-m_t}^2+\frac{A}{\eta_T} \right).
 \]
 We now apply  Lemma 2 on the sum across $T$ rounds
 by noticing that $\eta_t = f(a_0 +\sum_{i=1}^{t-1}a_i)$,
 where $a_i = \norm{s_i -m_i}^2$, $f(x) = \frac{1}{\sqrt{x}}$,
 and $a_0 = \frac{1}{\eta_1^2}$.
 By assumption we also have that $0 \leq a_i= \norm{s_i -m_i}^2 \leq C $.
 
 Therefore, by Lemma 2
 \begin{align*}
     &\sum_{t=1}^T\eta_t \norm{s_t-m_t}^2 \leq \\
     &\left(C - \frac{1}{\eta_1^2}\right)\eta_1 + 2\left(\sqrt{\frac{1}{\eta_1^2}+\sum_{t=1}^T \norm{s_t-m_t}^2} - \frac{1}{\eta_1}\right)\\
     &= C\eta_1 -\frac{3}{\eta_1} + 2\sqrt{\frac{1}{\eta_1^2}+\sum_{t=1}^T \norm{s_t-m_t}^2}\\
     &\leq 2\sqrt{\frac{1}{\eta_1^2}+\sum_{t=1}^T \norm{s_t-m_t}^2} \quad \mbox{ if } \eta_1 \leq \sqrt{\frac{3}{C}}.
 \end{align*}

\end{proof}

\section{Proof of Theorem 4}
An optimistic Lagrangian hedging+ algorithm playing a fixed point of
$\tilde{M}^{\Phi+}$, with
a convex potential function that is positive invariant and smooth $F$ and satisfying conditions
(\ref{safe}-\ref{smooth}), with positive decreasing stepsizes $0 < \eta_t \leq \eta_{t-1}$, ensures 
\[
   F(\eta_T s^{\Phi+}_{1:T}) \leq \frac{L}{2} \sum_{t=1}^T\eta_T \eta_t \norm{s^{\Phi}_t-m_t}^2.
\]
\begin{proof}
The proof resembles closely to that of Theorem \ref{thm-opt-step}. 

\begin{align*}
    &F(\eta_t s^{\Phi+}_{1:t}) =
    F(\eta_t (s^{\Phi+}_{1:t-1} + s^\Phi_t +m_t -m_t)^+) \\
    &\leq F(\eta_t (s^{\Phi+}_{1:t-1}+m_t)) +
    \langle \nabla F(\eta_t(s^{\Phi+}_{1:t-1}+m_t)), \eta_t(s^\Phi_t-m_t)\rangle \\
    &+ \eta_t^2\frac{L}{2}\norm{s^\Phi_t-m_t}^2 \\
    &\leq F(\eta_t (s^{\Phi+}_{1:t-1}+m_t))
    - F(\eta_t s^{\Phi+}_{1:t-1})+F(\eta_t s^{\Phi+}_{1:t-1})\\
    &+
    \langle \nabla F(\eta_t(s^{\Phi+}_{1:t-1}+m_t)), -\eta_t m_t\rangle + \eta_t^2\frac{L}{2}\norm{s^\Phi_t-m_t}^2.
\end{align*}
The rest follows from the same arguments as
Theorem \ref{thm-opt-step}.
\end{proof}

\section{The $\Phi$ Regret Fixed Point}

\citeauthor{greenwald2006bounds} showed that 
$x_t$ that is a fixed point of $M_t$ is guaranteed
to satisfy the generalized Blackwell condition
\[
\langle \nabla F(s^\Phi_{1:t-1}), s_t^\Phi \rangle \leq 0.
\]
Our results reuse this observation by modifiying the
regret vector $s^\Phi_{1:t-1}$ with a prediction
and possibly using the modified $s^{\Phi+}_{1:t-1}$
vector.
More generally, we use the following result that follows directly from \citeauthor{greenwald2006bounds}; the following inequality holds
\[
\langle \nabla F(z), s_t^\Phi \rangle \leq 0,
\]
if the operator used to construct the fixed point 
is defined to be
\[
M_t^{\Phi}(x) = \frac{ 
    \sum_{\phi \in \Phi} (\nabla F(z))_{\phi} \phi(x) }{\langle \nabla F(z), \mathbf{1}\rangle}.
\]

\end{document}